\newtheorem{theorem}{Theorem}
\newtheorem{corr}{Corollary}
\begin{document}
\title{Noise Tolerance Under Risk Minimization}
\author{Naresh~Manwani,
        P.~S.~Sastry,~\IEEEmembership{Senior Member,~IEEE}
\thanks{Naresh Manwani and P. S. Sastry are with the Department
of Electrical Engineering, Indian Institute of Science, Bangalore,
560012 India e-mail: (naresh,sastry@ee.iisc.ernet.in).}
} 

\maketitle
\begin{abstract}
In this paper we explore noise tolerant learning of classifiers. We formulate the problem as follows.
We assume that there is an ${\bf unobservable}$ training set which is noise-free. The actual training set given to the learning algorithm
is obtained from this ideal data set by corrupting the class label of each example. The probability that the class label
of an example is corrupted is a function
of the feature vector of the example. This would account for most  kinds of noisy data one encounters in practice.
We say that a learning method is noise tolerant if the classifiers learnt with noise-free data 
and with noisy data, both have the same classification accuracy on the noise-free data.
In this paper we analyze the noise tolerance properties of risk minimization (under 
different loss functions).
We show that risk minimization under 0-1 loss function has impressive noise tolerance properties and that under
squared error loss is tolerant only to uniform noise; 
risk minimization under other loss functions is not noise tolerant. 
We conclude the paper with some discussion on implications of these theoretical results. 
\end{abstract}
\begin{IEEEkeywords}
Loss functions, risk minimization, label noise, noise tolerance.
\end{IEEEkeywords}

\IEEEpeerreviewmaketitle
\def \bw {\tilde{\mathbf{w}}}
\def \by {\tilde{\mathbf{y}}}
\def \bx {\tilde{\mathbf{x}}}
\def \bz {\tilde{\mathbf{Z}}}
\def \bX {\tilde{\mathbf{X}}}
\def \xx {\mathbf{x}}
\def \yy {\mathbf{y}}
\def \ee {\mathbf{e}}
\def \ww {\mathbf{w}}
\def \amax {\operatorname{argmax}}
\def \amin {\operatorname{argmin}}
\def \muu  {\mbox{\boldmath $\mu$}}

\section{Introduction}
In most situations of learning a classifier, one has to contend with noisy examples. Essentially, when training examples are noisy, the class
labels of examples as provided in the training set may not be `correct'. Such noise can come through many sources. 
If the class conditional densities overlap, then same feature vector can come from different classes
with different probabilities and this can be one source of noise. 
In addition, in many applications (e.g, document classification etc.), training examples are obtained through manual 
classification and there will be inevitable human errors and biases.
Noise in training data can come about by errors in 
feature measurements also. Errors in feature values would imply that the observed feature vector is at 
a different point in the feature space though the label remains the same and hence it can also be looked 
at as a noise corruption of the class label. It is always desirable to have classifier design strategies 
that are robust to noise in training data. 

A popular methodology in classifier learning is (empirical) risk minimization \cite{Duda,Bishop2006}.  
Here, one chooses a convenient loss function and the goal of learning is to find
a classifier that minimizes risk which is expectation of the loss. The expectation is with respect to the 
underlying probability distribution over the
feature space. In case of noisy samples, this
expectation would include averaging with respect to noise also.

In this paper, we study noise tolerance properties of risk minimization under different loss functions such as 0-1 loss, squared error loss, 
exponential loss, hinge loss etc. We consider what we call non-uniform noise where the probability of the class label for an example 
being incorrect, is a function of the feature vector of the example. This is a very general noise model and can account for most cases 
of noisy datasets. 
We say that risk minimization (with a loss function) is noise tolerant if the  minimizers under the noise-free 
and noisy cases have the same probability of mis-classification on noise-free datasets. We present some analysis to 
characterize noise tolerance of risk minimization with different 
loss functions. 

As we show here, the 0-1 loss function has very good noise tolerance properties, In general, risk minimization under 0-1 loss 
is desirable because it achieves least probability of mis-classification. However, the optimization problem here is computationally 
hard. 
To overcome this, many of the classifier learning strategies 
use some convex surrogates of the 0-1 loss function (e.g., hinge loss, squared error loss etc.). 
The convexity of the resulting optimization problems makes these approaches computationally efficient.
There have been statistical analyses of such methods so that one can bound risk under 0-1 loss, of the classifier obtained as a 
minimizer of risk under some other convex loss \cite{bartlett}. The analysis we present here is completely different because the 
objective is to understand noise tolerance properties of risk minimization. 
Here we are interested in comparing minimizers of risk under the same loss function but under different noise conditions.

The rest of the paper is organized as follows. In section 2 we discuss the concept of noise tolerant learning of classifiers. 
In section 3, we present our results regarding
noise tolerance of different loss functions. 
We  present a few simulation results to support our analysis 
in section~4 and conclude in section 5.

\section{Noise Tolerant Learning}
When we have to learn with noisy data where class labels may be corrupted, we want approaches that are robust to label noise.
Most of the standard classifiers (e.g. support vector machine, adaboost etc.) perform well only under noise-free training data; 
when there is label noise, they
tend to over-fit.

There are many approaches to tackle label noise in training data. Outliers detection
\cite{Brodley1999}, restoration of clean labels for noisy points \cite{Finedual99} and restricting the effects of noisy points
on the classifier \cite{Xu06,Tao2005} are some of the well known tricks to tackle the label noise.
However all these are mostly heuristic and also need extra computation.
Many of them also assume uniform noise and sometimes assume knowledge of noise variance.

A different approach would be to look for methods that are inherently noise tolerant. That is, the algorithm will handle the noisy data the same
way that it would handle noise-free data. However due to some property of the algorithm, its output would be same whether the input is noise
free or noisy data. 

Noise tolerant learning using statistical queries \cite{Kearns98} is one such approach. The algorithm learns 
 by using some statistical quantities computed
from the examples. That is the reason for its noise tolerance properties.
However, the approach is mostly limited to binary features. Also, the appropriate statistical quantities to
be computed depends on the type of noise and the type of classifier being learned. 

In this paper, we investigate the noise tolerance properties of the general risk minimization
strategy.
We formulate
our concept of noise tolerance as explained below. For simplicity,  we consider only the 
 two class classification problem.  

We assume that there exists an ideal noise-free sample which is {\em unobservable} but where the class label given to each example is correct. 
We represent this ideal sample by 
$\{({\bf x}_i, y_{{\bf x}_i}),i=1\ldots N\}$, where $\xx_i \in \Re^d,y_{{\bf x}_i}\in\{-1,1\},\forall i$

The actual training data given to the learning
algorithms is obtained by corrupting these (ideal) noise-free examples by changing the class label on each example. 
The actual training data set would be 
$\{({\bf x}_i, \hat{y}_{{\bf x}_i}),i=1\ldots N\}$,
where $\hat{y}_{{\bf x}_i}=y_{{\bf x}_i}$ with probability $(1-\eta_{{\bf x}_i})$ and is $\hat{y}_{{\bf x}_i}=-y_{{\bf x}_i}$ with probability
$\eta_{{\bf x}_i},\;\forall i$. 
If $\eta_{\xx_i}=\eta,\;\forall {\xx_i}$, then we say that the noise is {\em uniform}.
Otherwise, we say noise is {\em non-uniform}. 

We note here that under non-uniform classification noise,
the probability of the class label being wrong can be different for different examples. 
{\em We assume, throughout this paper, that $\eta_{\bf x} < 0.5$, $\forall {\bf x}$, which is  
reasonable.}

As a notation, we assume that the risk is defined over class of functions, $f$, that map feature space to real numbers. 
This allows us to treat all loss functions through a single notation. We call any such $f$ a classifier and the class label assigned  
by it to a feature vector $\xx$ would be $\mbox{sign}(f(\xx))$. 

Let $L(\cdot, \cdot)$ be a specific loss function. 
For any classifier $f$, the risk under no-noise case is
\begin{eqnarray}
R(f) = E\left[ L(f({\bf x}), y_{\bf x})\right] \nonumber
\end{eqnarray}
where $L(.,.)$ is the loss function. The expectation here is with respect to the underlying distribution of the feature vector ${\bf x}$. 
Let $f^*$ be the minimizer of $R(f)$.

Under the noisy case, the risk of any classifier $f$ is,
\begin{equation}
R^{\eta}(f) = E\left[ L(f({\bf x}), \hat{y}_{\bf x})\right]\nonumber
\end{equation}
Note that $\hat{y}_{\bf x}$ has additional randomness due to noise corruption of labels and 
the expectation includes averaging with respect to that also.
To emphasize this, we use the notation $R^{\eta}$ to denote risk under noisy case.
Let $f_{\eta}^*$ be the minimizer of $R^{\eta}$.

{\bf Definition~1}: Risk minimization under loss function $L$, is said to be {\em noise-tolerant} if 
$P[\mbox{sign}(f^*(\xx))=y_{\xx}]=P[\mbox{sign}(f^*_{\eta}(\xx))=y_{\xx}]$, 
where the probability is w.r.t. the underlying distribution of $(\xx, \: y_{\xx})$.

That is, the general learning strategy of risk minimization under a given loss function, is said to 
be noise-tolerant if the classifier it would learn with the noisy training data has the same 
probability of misclassification as that of the classifier the algorithm would learn if it is given 
ideal or noise-free class labels for all training data.  
Noise tolerance can be achieved even when $f_{\eta}^*\neq f^*$ because we are only comparing the 
probability of mis-classification of $f_{\eta}^*$ and $f^*$. However, $f^*_{\eta}=f^*$ is a sufficient condition for noise tolerance.

Thinking of an ideal noise-free sample allows us to properly formulate the noise-tolerance property as above.
We note once again that this noise-free sample is 
assumed to be unobservable. Making the probability of label corruption, $\eta_{\bf x}$, to be 
a function of ${\bf x}$ would take care of most cases of noisy data. 
For example, consider a 2-class problem with overlapping class conditional densities where the training 
data are generated by sampling from the respective class conditional densities. Then we can think of the 
unobservable noise-free dataset to be the one obtained by classifying the examples using Bayes optimal 
classifier. The labels given in the actual training dataset would not agree with the ideal labels (because 
of overlapping class conditional densities); however, the observed labels are easily seen to be 
noisy versions where the noise probability is a function of the feature vector. If there are any 
further sources of noise in generating the dataset given to the algorithm, these can also be easily 
accounted for by $\eta_{\bf x}$ because the probability of wrong label for different examples can be 
different. 

\section{Noise Tolerance of Risk Minimization}
In this section,
we analyze noise tolerance property of risk minimization with respect to different loss functions.
\subsection{0-1 Loss Function}
The 0-1 loss function is,
\begin{eqnarray}
L_{0-1}(f({\bf x}), y_{\bf x}) &=& I_{\{\mbox{sign}(f(\xx)) \neq y_{\xx}\}} \nonumber
\end{eqnarray}
where $I_{A}$ denotes indicator of event $A$. 

\begin{theorem}
Assume $\eta_{\xx} < 0.5, \; \forall \xx$. Then, 
(i). Risk minimization with 0-1 loss function is noise tolerant under uniform noise. (ii). In case of non-uniform noise, 
risk minimization with 0-1 loss function is noise tolerant if $R(f^*)=0$.
\label{thm-1}
\end{theorem}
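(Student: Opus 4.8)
The plan is to extract a clean algebraic link between the noisy risk $R^{\eta}$ and the noise-free risk $R$ for the $0$-$1$ loss, and then read off both claims from that link. The crucial preliminary observation is that for the $0$-$1$ loss $R(f)=P[\mbox{sign}(f(\xx))\neq y_{\xx}]$ is \emph{exactly} the probability of misclassification. Hence the noise-tolerance condition of Definition~1 is equivalent to the single scalar equality $R(f^*)=R(f^*_{\eta})$, and it suffices to control the $R$-value of any minimizer of $R^{\eta}$.

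First I would condition on a fixed $\xx$ and average over the label noise only. Writing $g(\xx)=\mbox{sign}(f(\xx))$, exactly one of two situations occurs: if $g(\xx)=y_{\xx}$ the noise-free loss is $0$ and the noise-averaged loss is $\eta_{\xx}$; if $g(\xx)\neq y_{\xx}$ the noise-free loss is $1$ and the noise-averaged loss is $1-\eta_{\xx}$. Both cases are captured by
\begin{equation}
E_{\hat y}\left[L_{0-1}(f(\xx),\hat y_{\xx})\right]=\eta_{\xx}+(1-2\eta_{\xx})\,L_{0-1}(f(\xx),y_{\xx}). \nonumber
\end{equation}
Taking expectation over $\xx$ yields the key relation
\begin{equation}
R^{\eta}(f)=E[\eta_{\xx}]+E\left[(1-2\eta_{\xx})\,L_{0-1}(f(\xx),y_{\xx})\right]. \nonumber
\end{equation}

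For part (i), uniform noise $\eta_{\xx}\equiv\eta$ collapses this to $R^{\eta}(f)=\eta+(1-2\eta)R(f)$. Since $\eta<0.5$ gives $1-2\eta>0$, this is a strictly increasing affine function of $R(f)$, so $f$ minimizes $R^{\eta}$ if and only if it minimizes $R$; thus any $f^*_{\eta}$ is a minimizer of $R$, giving $R(f^*_{\eta})=R(f^*)$ and hence noise tolerance. For part (ii), the hypothesis $R(f^*)=0$ together with $L_{0-1}\ge 0$ forces $L_{0-1}(f^*(\xx),y_{\xx})=0$ almost everywhere, so the key relation gives $R^{\eta}(f^*)=E[\eta_{\xx}]$. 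Because $1-2\eta_{\xx}>0$ pointwise, the second term in the key relation is nonnegative for every $f$, whence $R^{\eta}(f)\ge E[\eta_{\xx}]=R^{\eta}(f^*)$, so $f^*$ is a global minimizer of $R^{\eta}$.

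The computations are routine; the step needing the most care is converting ``$f^*$ minimizes $R^{\eta}$'' into the required $R(f^*_{\eta})=R(f^*)$ without presuming a unique minimizer. I would settle this directly via positivity: for any minimizer $f^*_{\eta}$ of $R^{\eta}$ in case (ii) we have $R^{\eta}(f^*_{\eta})=E[\eta_{\xx}]$, so the key relation forces $E[(1-2\eta_{\xx})L_{0-1}(f^*_{\eta}(\xx),y_{\xx})]=0$; since $1-2\eta_{\xx}>0$ this pins $L_{0-1}(f^*_{\eta}(\xx),y_{\xx})=0$ almost everywhere, i.e. $R(f^*_{\eta})=0=R(f^*)$. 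The same positivity argument secures part (i), so no uniqueness assumption on $f^*$ or $f^*_{\eta}$ is ultimately required, and the sufficient condition $f^*_{\eta}=f^*$ noted after Definition~1 is recovered as a special case.
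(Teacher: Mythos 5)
Your proposal is correct and takes essentially the same route as the paper: your key identity $R^{\eta}(f)=E[\eta_{\mathbf{x}}]+E[(1-2\eta_{\mathbf{x}})L_{0-1}(f(\mathbf{x}),y_{\mathbf{x}})]$ is exactly the paper's decomposition of the noisy risk (its Eq.~(\ref{agnostic_noisy})), and both parts (i) and (ii) are then argued from it just as the paper does. The one place you go beyond the paper is in explicitly showing that \emph{every} minimizer of $R^{\eta}$ has the same noise-free error as $f^*$ (via positivity of $1-2\eta_{\mathbf{x}}$), whereas the paper only establishes that $f^*$ is \emph{a} minimizer of $R^{\eta}$ and leaves the non-uniqueness issue implicit; this is a worthwhile tightening, not a different method.
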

\begin{proof}
For any $f$, let  $S(f):=\{{\bf x} \;|\;\mbox{sign}(f(\xx)) \neq y_{\bf x} \}$ and 
$S^c(f)=\{{\bf x} \;|\;\mbox{sign}(f(\xx)) = y_{\bf x} \}$. 
The risk for a function $f$ under no-noise case is 
\begin{eqnarray}\label{non-noisy}
 R(f) = E_{\xx} \Big{[} I_{\{\mbox{sign}(f(\xx)) \neq y_{\bf x}\}}\Big{]} =\int_{S(f)} \; dp({\bf x})\nonumber
\end{eqnarray}
where $dp({\bf x})$ denotes that the above integral is an expectation integral with respect to the distribution of feature vectors.
Recall $f^*$ is the minimizer of $R$.
Let $A(\xx):=I_{\{\mbox{sign}(f(\xx))\neq y_{\bf x}\}}$. Then risk for any $f$ in presence of noise would be  
\begin{eqnarray}
R^{\eta}(f) &=& E_{\bf x}\Big{[}(1-\eta_{\bf x}) A(\xx) + \eta_{\bf x} (1- A(\xx)) \Big{]} \nonumber \\
&=& \int_{\Re^d} \; \eta_{{\bf x}} dp({\bf x}) +\int_{S(f)} \; (1-2\eta_{\bf x}) dp({\bf x}) \label{agnostic_noisy}
\end{eqnarray}
 Given any $f \neq f^*$, using (\ref{agnostic_noisy}) we have
\begin{eqnarray}
R^{\eta}(f^*) - R^{\eta}(f) = \int_{S(f^*)\cap S^c(f)} \; (1-2\eta_{\bf x}) dp({\bf x}) \nonumber \\
  -\int_{S(f)\cap S^c(f^*)} \; (1-2\eta_{\bf x}) dp({\bf x}) \label{diff}
\end{eqnarray} 

For the first part of the theorem, we consider uniform noise and hence $\eta_{\bf x} = \eta,~\forall {\bf x}$. 
From Eq.(\ref{agnostic_noisy}),
we now get, for any $f$, $R^{\eta}(f)=\eta + (1-2\eta)R(f)$. Hence we have, $R^{\eta}(f)-R^{\eta}(f^*)
=(1-2\eta)(R(f)-R(f^*))$. 
Since $f^*$ is minimizer of $R$, we have $R(f)\geq R(f^*),\forall f\neq f^*$, 
which implies $R^{\eta}(f)\geq R^{\eta}(f^*),~\forall f\neq f^*$ if $\eta <0.5$. 
Thus under uniform label noise, $f^*$ also minimizes $R^{\eta}$. This completes proof of first part of the 
theorem. (The fact that risk minimization under 0--1 loss function is tolerant to uniform noise is 
known earlier (see, e.g. \cite[chap~4]{Sastrybook}))  

For the second part of the theorem, $\eta_{\bf x}$ is no longer constant but we assume $R(f^*) = 0$. 
This implies
$\int_{S(f^*)} \; dp({\bf x})=0$ and hence from Eq.(\ref{diff}), we get
$R^{\eta}(f^*)- R^{\eta}(f)\leq 0$ if $\eta_{\bf x} < .5,\forall {\bf x}$.
Thus, $f^*$, which is minimizer of $R$, also minimizes
$R^{\eta}$. This shows that risk minimization with 0-1 loss function is noise-tolerant under non-uniform noise 
if $R(f^*)=0$ and completes proof of the theorem. 
\end{proof}

If $R(f^*) \neq 0$, then risk minimization is, in general, not 
tolerant to non-uniform noise as we show by a counter example. 

\textbf{Example 1:} Fig. \ref{noisy} shows a binary classification problem where examples are generated using the 
true classifier $f^*_{\mbox{true}}(\xx)=\mbox{sign}(x_1^2+x_2)$.
Let the probability distribution on the feature space be uniformly concentrated on the training dataset. 
We note here that we get perfect classification if we consider quadratic classifiers. 
Since we want to consider the case where $R(f^*)>0$, we restrict the family of classifiers over which risk is minimized to linear classifiers.\\
(a) \textit{Without Noise: }The linear classifier which minimizes $R$ is
$f^*_{\mbox{lin}}(\xx)=x_2+5$ and $S(f^*_{\mbox{lin}})=\{\xx_9,\xx_{10}\}$.\\
(b) \textit{With Noise: }We now introduce non-uniform label noise in the data with the noise rates as follows: $\eta_{{\bf x}_9} =0.125$, $\eta_{{\bf x}_3} =0.4$, $\eta_{{\bf x}_5} =0.4$,
$\eta_{{\bf x}_7} =0.4$ and any noise rate (less than 0.5) to rest of the points. Consider another linear classifier 
$f^{\eta}_{\mbox{lin}}(\xx)=15.5x_1+8x_2+10$. From Fig. \ref{noisy}, 
we see that $S(f^{\eta}_{\mbox{lin}})=\{{\bf x}_3, {\bf x}_5, {\bf x}_7,{\bf x}_{10}\}$. Now using Eq.(\ref{diff}), we get
\begin{eqnarray}
&& R^{\eta} (f^*_{\mbox{lin}}) - R^{\eta} (f^{\eta}_{\mbox{lin}}) \nonumber \\
&= & \frac {(1-2\eta_{{\bf x}_9})-(1-2\eta_{{\bf x}_3})-(1-2\eta_{{\bf x}_5})-(1-2\eta_{{\bf x}_7})}{16} \nonumber \\
&=& \frac{(1-2*0.125)-3*(1-2*0.4)}{16} = \frac {0.15}{16} > 0. \nonumber
\end{eqnarray}
That is, $R^{\eta} (f^*_{\mbox{lin}}) > R^{\eta} (f^{\eta}_{\mbox{lin}})$, 
although $R(f^*_{\mbox{lin}}) < R(f^{\eta}_{\mbox{lin}})$.
\begin{figure}
\begin{center}
    \includegraphics[scale = .38]{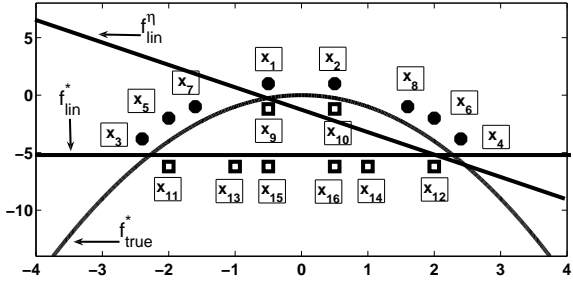}
\caption{Data for Example~1. Learning best linear classifier when the actual classification boundary is quadratic.}
\label{noisy}
\end{center}
\end{figure}

This example proves that risk minimization with 0-1 loss is, in general, not noise tolerant if $R(f^*) \neq 0$. 

{\bf Remark~1}: 
We note here that the assumption $R(f^*)=0$ may not be very restrictive; mainly because the noise free ideal data set is only a 
mathematical entity and need not be observable. For example, we can take $f^*$ to be the Bayes optimal classifier and assume that 
the ideal data set is obtained by classifying samples using the Bayes optimal classifier. Then $R(f^*)=0$. This means that if we 
minimize risk under 0-1 loss function with the actual training set, then the minimizer would be $f^*$. 

Finally, we note that all the above analysis is applicable to empirical risk minimization also by simply 
taking $p({\bf x})$ (in Eq.(\ref{non-noisy}) and (\ref{diff})) to be the empirical distribution.

While, as shown here, 0-1 loss function has impressive noise tolerant properties, 
risk minimization with this loss is difficult because it is a non-convex optimization problem. 
In machine learning, many other loss functions are used to make risk minimization
 computationally efficient. We will now examine the noise tolerance properties of other loss functions.  

\subsection{Squared Error Loss Function}
Squared error loss function is given by,
\begin{eqnarray}
\nonumber L_{\mbox{square}}(f({\bf x}), y_{\bf x}) = (f({\bf x})-y_{\bf x})^2 
\end{eqnarray}
We first consider the case when the function $f(\xx)$
is an affine function of $\xx$.
Let $f({\bf x})={\bf x}^T{\bf w}+b=\tilde{\bf w}^T\tilde{\bf x}$, where $\tilde{\bf w}=[{\bf w}~~b]^T \in \Re^{d+1}$ and
$\tilde{\bf x}=[{\bf x}~~1]^T \in \Re^{d+1}$.

\begin{theorem}
Risk minimization with squared error loss function for finding linear classifiers is noise tolerant 
under uniform noise if $\eta_{\xx} = \eta < 0.5$.
\label{thm-2}
\end{theorem}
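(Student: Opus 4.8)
The plan is to compute the noisy risk $R^\eta(f)$ explicitly under uniform noise and show that it differs from the clean risk $R(f)$ only by a rescaling of the cross term, which in turn rescales the optimal weight vector by a positive constant and leaves the induced decision boundary unchanged. First I would average the squared loss over the label corruption. Since under uniform noise $\hat{y}_\xx$ equals $y_\xx$ with probability $(1-\eta)$ and $-y_\xx$ with probability $\eta$, I would write
\begin{equation}
R^\eta(f) = E_\xx\big[(1-\eta)(f(\xx)-y_\xx)^2 + \eta(f(\xx)+y_\xx)^2\big]. \nonumber
\end{equation}
Expanding both squares and using $y_\xx^2 = 1$ (since $y_\xx \in \{-1,1\}$), the $f(\xx)^2$ and $y_\xx^2$ terms survive unchanged while the cross term collapses to $-2(1-2\eta)f(\xx)y_\xx$. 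Thus $R^\eta(f) = E_\xx[f(\xx)^2] - 2(1-2\eta)E_\xx[f(\xx)y_\xx] + 1$, which is exactly the clean risk with the cross-term coefficient scaled from $-2$ down to $-2(1-2\eta)$.

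Next I would exploit the affine parametrization $f(\xx)=\bw^T\bx$. Writing $\Sigma := E_\xx[\bx\bx^T]$ for the (positive semidefinite) second-moment matrix and $\muu := E_\xx[\bx\, y_\xx]$ for the correlation vector, both risks become convex quadratics in $\bw$:
\begin{equation}
R(\bw) = \bw^T\Sigma\bw - 2\bw^T\muu + 1, \qquad R^\eta(\bw) = \bw^T\Sigma\bw - 2(1-2\eta)\bw^T\muu + 1. \nonumber
\end{equation}
Setting gradients to zero yields the normal equations $\Sigma\bw^* = \muu$ for the clean minimizer and $\Sigma\bw = (1-2\eta)\muu$ for the noisy one. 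The key observation is then immediate: if $\bw^*$ solves the former, then $(1-2\eta)\bw^*$ solves the latter, since $\Sigma[(1-2\eta)\bw^*] = (1-2\eta)\muu$. Hence a minimizer of $R^\eta$ is $\bw_\eta^* = (1-2\eta)\bw^*$, a positive scalar multiple of the clean minimizer. I would stress that this step requires no invertibility of $\Sigma$, only convexity (positive semidefiniteness) so that the first-order condition is sufficient for a global minimum.

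Finally I would close the argument using the scale invariance of the decision rule. Because $\eta < 0.5$ forces $(1-2\eta)>0$, for every $\xx$ we have $\mbox{sign}(\bw_\eta^{*T}\bx) = \mbox{sign}((1-2\eta)\bw^{*T}\bx) = \mbox{sign}(\bw^{*T}\bx)$, so $f^*$ and $f_\eta^*$ assign identical labels to every point and therefore have identical misclassification probability, which is noise tolerance in the sense of Definition~1. The one genuine subtlety worth flagging as the main obstacle is that $f_\eta^* \neq f^*$ as \emph{functions}; tolerance survives only because classification depends on $f$ through its sign, which is unaffected by positive rescaling. This is precisely the situation anticipated in the remark after Definition~1, that tolerance can hold even when $f_\eta^* \neq f^*$, and it is also why the argument is special to squared loss with linear classifiers: a non-uniform $\eta_\xx$ would not factor cleanly out of the cross term, and a nonlinear family would break the exact rescaling.
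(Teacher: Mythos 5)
Your proof is correct and takes essentially the same route as the paper's: average the squared loss over the label flip to write $R^\eta$ as a mixture, show the noisy minimizer is $(1-2\eta)\tilde{\mathbf{w}}^*$, and conclude from $(1-2\eta)>0$ that the sign of the classifier, and hence its misclassification probability, is unchanged. The only difference is a minor refinement: you argue via the normal equations and note that positive semidefiniteness of $E[\tilde{\mathbf{x}}\tilde{\mathbf{x}}^T]$ suffices, whereas the paper writes the minimizer in closed form using $\left[E[\tilde{\mathbf{x}}\tilde{\mathbf{x}}^T]\right]^{-1}$, implicitly assuming invertibility.
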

\begin{proof}
For noise-free case, the risk is, $R(\tilde{\bf w}) = E\left[ (\tilde{\bf x}^T\tilde{\bf w} - y_{\bf x})^2 \right]$,
whose minimizer is $\tilde{\bf w}^*= \big{[}E[\tilde{\bf x}\tilde{\bf x}^T]\big{]}^{-1} E[\tilde{\bf x}y_{\bf x}]$. Risk under uniform label noise ($\eta_{\bf x} = \eta,\;\forall {\bf x}$) is given as 
\begin{eqnarray}\label{eq3}
R^{\eta}(\tilde{\bf w})
=& E_{\bf x} E_{\hat{y}_{\bf x}|{\bf x}}\left[(\tilde{\bf x}^T\tilde{\bf w}
- \hat{y}_{\bf x})^2 | {\bf x} \right] \nonumber \\
=& (1-\eta) E_{\bf x} \left[ (\tilde{\bf x}^T\tilde{\bf w} - y_{\bf x})^2 \right] +  \eta E_{\bf x} \left[ (\tilde{\bf x}^T\tilde{\bf w} + y_{\bf x})^2\right] \nonumber 
\end{eqnarray}
which is minimized by
\begin{eqnarray}
\tilde{\bf w}_{\eta}^*= (1-2\eta) \left[E_{\bf x}[\tilde{\bf x}\tilde{\bf x}^T]\right]^{-1} E_{\bf x}[\tilde{\bf x}y_{\bf x}] =(1-2 \eta) \tilde{\bf w}^*\nonumber
\end{eqnarray}
Since we assume $\eta < 0.5$, we have $(1-2 \eta) >0$. Hence 
we get, $\mbox{sign}(\bx^T\tilde{\bf w}_{\eta}^*)=\mbox{sign}(\bx^T\bw^*),~\forall \xx$. Which means
$P[\mbox{sign}(\bx^T\tilde{\bf w}_{\eta}^*)= y_{\xx}]=P[\mbox{sign}(\bx^T\tilde{\bf w}^*)= y_{\xx}]$. 
Thus under uniform noise, least square approach to learn linear classifiers is noise tolerant and the 
proof of theorem is complete.
\end{proof} 
\begin{corr}\label{corr1}
 Fisher Linear Discriminant (FLD) is noise tolerant under uniform label noise.
\end{corr}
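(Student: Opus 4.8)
The plan is to obtain Corollary~\ref{corr1} as a direct consequence of Theorem~\ref{thm-2}, using the standard fact that the Fisher Linear Discriminant direction coincides, up to a positive scalar, with the solution of least-squares linear regression onto the $\pm 1$ labels (see, e.g., \cite{Bishop2006}). First I would write down the normal equations for minimizing $E[(\bx^T\bw - y_{\xx})^2]$: setting the derivative with respect to the bias to zero expresses $b$ through the means, and substituting back shows that the weight part $\ww$ satisfies $S_T\ww = c\,(\muu_1 - \muu_2)$, where $S_T$ is the total scatter matrix, $\muu_1,\muu_2$ are the class means, and $c>0$. Decomposing $S_T = S_W + S_B$ (within- plus between-class scatter) and noting that $S_B\ww$ always points along $\muu_1-\muu_2$, one concludes $\ww \propto S_W^{-1}(\muu_1-\muu_2)$, which is exactly the FLD direction. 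Equivalently, the FLD target coding $t = a\,y_{\xx} + c$ with $a>0$ differs from the $\pm 1$ coding only by a positive scale $a$ and an additive constant $c$ that shifts the bias alone; hence the two least-squares problems produce the same separating direction.

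Next I would apply this equivalence to both the (unobservable) noise-free data and the noisy data. By Theorem~\ref{thm-2}, the least-squares minimizer under uniform noise satisfies $\bw_\eta^* = (1-2\eta)\,\bw^*$, and since $\eta<0.5$ the factor $(1-2\eta)$ is strictly positive. Because the FLD solution on any dataset is a positive multiple of the least-squares solution on that same dataset, the noisy FLD weight vector is a positive multiple of the clean FLD weight vector. A positive scaling leaves $\mbox{sign}(\bx^T\bw)$ unchanged for every $\xx$, so the two classifiers assign identical labels over the whole feature space and therefore have equal probability of misclassification, which is precisely the condition in Definition~1.

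The step I expect to be the main obstacle is pinning down the decision threshold rather than merely the direction: FLD by itself only prescribes a projection direction, and the noise-tolerance conclusion requires the threshold to transform consistently with it. The clean resolution is to define the FLD classifier through its least-squares formulation (with the affine coding $t = a\,y_{\xx}+c$), so that $\ww$ and $b$ come from a single least-squares problem and Theorem~\ref{thm-2} applies verbatim to the augmented vector $\bw$, covering the bias as well. I would also note why the tempting direct route through scatter matrices is harder: under uniform noise the observed class means mix across classes, so $\tilde\muu_1-\tilde\muu_2$ is \emph{not} in general proportional to $\muu_1-\muu_2$ (for unequal priors), and one would have to show that the simultaneous change in $S_W$ exactly compensates. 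Routing through the least-squares equivalence sidesteps this entirely, since Theorem~\ref{thm-2} is stated in terms of the expectation $E_{\xx}$ and already absorbs the change in the second-moment and cross-moment matrices induced by label flipping.
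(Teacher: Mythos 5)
Your reduction of the FLD \emph{direction} to Theorem~\ref{thm-2} is sound, and it is close in spirit to the paper's own proof (which also goes through the least-squares formulation of FLD). The genuine gap is exactly the point you flagged as the main obstacle --- the threshold --- and your proposed resolution does not close it. Theorem~\ref{thm-2} does \emph{not} apply verbatim to the coding $t=a\,y_{\xx}+c$: its proof rests on $E[\hat{y}_{\xx}\,|\,\xx]=(1-2\eta)y_{\xx}$, whereas for coded targets $E[\hat{t}\,|\,\xx]=a(1-2\eta)y_{\xx}+c\neq(1-2\eta)(a y_{\xx}+c)$ when $c\neq 0$. Concretely, writing $\tilde{u}=\big[E[\bx\bx^T]\big]^{-1}E[\bx y_{\xx}]$ and $\tilde{v}=\big[E[\bx\bx^T]\big]^{-1}E[\bx]$, the clean minimizer under a fixed affine coding is $a\tilde{u}+c\tilde{v}$ while the noisy one is $a(1-2\eta)\tilde{u}+c\tilde{v}$; for unbalanced classes ($c\neq 0$, $\tilde{v}\neq 0$) these are not positively proportional and in general define different sign classifiers. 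For the same reason, your sentence ``the FLD solution on any dataset is a positive multiple of the least-squares solution on that same dataset'' is true of the weight part $\ww$ only, not of the augmented vector containing $b$, so the final sign-invariance step does not go through as written.

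What actually saves FLD --- and what the paper proves --- is that the Fisher coding constants are recomputed from the \emph{noisy} class counts ($t=N/N_1^{\eta}$ and $-N/N_2^{\eta}$). These targets sum to zero, which forces $b=-\ww^T\muu$ with $\muu$ the overall, label-independent sample mean, in both the clean and the noisy problems; hence once the directions are positively proportional, the biases scale by the same positive factor and the classifier is unchanged. Equivalently, one must verify the compensation $c_{\eta}/a_{\eta}=(1-2\eta)\,c/a$, which follows from the count-mixing identity $N_1^{\eta}-N_2^{\eta}=(1-2\eta)(N_1-N_2)$; this is the step your argument is missing, and it is precisely where the paper does real work (deriving $b=-\ww^T\muu$ and using the mixing of class means). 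A side correction to your last paragraph: under uniform noise the difference of the noisy class means \emph{is} always a positive multiple of $\muu_1-\muu_2$, for any priors --- only the mixture weights inside each individual noisy mean depend on the priors --- so the direct scatter-matrix route you dismiss is not blocked by that issue; indeed the paper follows essentially that route, exploiting the fact that the total scatter matrix appearing on the left-hand side of the normal equations is label-independent.
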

\begin{proof}
For binary classification, FLD \cite{Bishop2006} finds direction ${\bf w}^*$ as, ${\bf w}^*={\arg\max}_{{\bf w}} \frac{{\bf w}^TS_B{\bf w}}{{\bf w}^TS_W {\bf w}}$, which is proportional to $S_W^{-1}(\muu_2-\muu_1)$. Here $S_B =(\muu_2-\muu_1)(\muu_2-\muu_1)^T$ and $S_W = \sum_{i=1}^2\sum_{{\bf x}_n\in C_i}({\bf x}_n-\muu_i)({\bf x}_n-\muu_i)^T$. $C_1$, $C_2$ represent the two classes and $\muu_1$, $\muu_2$ denote corresponding means. FLD can be obtained as the risk minimizer under squared error loss function \cite[chap~4]{Bishop2006} by choosing the target values as: $t_i=\frac{N}{N_1},~\forall {\bf x}_i\in C_1$ and
$t_i=-\frac{N}{N_2},~\forall {\bf x}_i\in C_2$, where $N_1=|C_1|$, $N_2=|C_2|$ and $N=N_1+N_2$.

When the training set is corrupted with uniform label noise, 
let $C^{\eta}_1$ and $C^{\eta}_2$ be the two sets now and
$\muu^{\eta}_1$ and $\muu^{\eta}_2$ corresponding means. Let
$N^{\eta}_1=|C^{\eta}_1|$ and $N^{\eta}_2=|C^{\eta}_2|$.
New target values are: $\hat{t}_i=\frac{N}{N^{\eta}_1},~\forall {\bf x}_i\in C_1^{\eta}$ and
$\hat{t}_i=-\frac{N}{N^{\eta}_2},~\forall {\bf x}_i\in C_2^{\eta}$.
The empirical risk in this case is,
$E^{\eta}(\ww,b)=\frac{1}{2}\sum_{i=1}^N ({\bf w}^T{\bf x}+b-\hat{t}_i)^2$.
Equating the derivative of $E^{\eta}$ with respect to $b$ to zero,
we get, $b=-{\bf w}^T\muu$,
where $\muu=\frac{1}{N}(N_1\muu_1+N_2\muu_2)$ is the mean of training set. Setting the gradient of $E^{\eta}$ with respect to $\ww$ to zero and using the values of $b$ and $\muu$, we get,
\begin{eqnarray}
&& \sum_{i=1}^N {\bf x}_i({\bf x}_i^T{\bf w}-\muu^T{\bf w}) = \sum_{i=1}^N \hat{t}_i{\bf x}_i \nonumber \\
&\Rightarrow & \Big{[}S_W+\frac{N_1N_2}{N}S_B \Big{]}{\bf w} = N(\muu^{\eta}_1-\muu^{\eta}_2) \nonumber\\
&\Rightarrow & \Big{[}S_W+\frac{N_1N_2}{N}S_B \Big{]}{\bf w} = N (1-2\eta)(\muu_1-\muu_2) \nonumber
\end{eqnarray}
where we have used the fact that, $\muu^{\eta}_1=(1-\eta)\muu_1+\eta \muu_2$ and $\muu^{\eta}_2=(1-\eta)\muu_2+\eta \muu_1$.
Note that $S_B{\bf w}\propto (\muu_2-\muu_1)$ for any ${\bf w}$. Thus we see that,
${\bf w}_{\eta}^* \propto S_W^{-1}(\muu_2-\muu_1)$.
Thus FLD is noise tolerant under uniform label noise.
\end{proof}
\textbf{Remark~2: } 
What we have shown is that risk minimization under squared error loss function is tolerant to uniform noise if we are learning linear classifiers. 
We can, in general, nonlinearly map feature vectors to a higher dimensional space so that the training set becomes linearly 
separable. Since uniform label noise in the original feature space should become uniform label noise in the 
transformed feature space, we feel that Theorem~2 should be true for risk minimization under squared error loss for any family of 
classifiers. 

Now consider the non-uniform noise case where  $\eta_{\bf x}$ is not same for all $\xx$.
Then, the risk $R^{\eta}$ is minimized by,
$\tilde{\bf w}_{\eta}^*= \big{[}E_{\bf x}[\tilde{\bf x}\tilde{\bf x}^T]\big{]}^{-1} E_{\bf x}[(1-2\eta_{\xx})\tilde{\bf x}y_{\bf x}]$.
Here, $\eta_{\xx}$ term can no longer be taken out of expectation. 
 Hence, we may not get noise tolerance. We show that it is so by a counter example as below. 
\begin{figure}[h]
 \begin{center}
  \includegraphics[scale=.35]{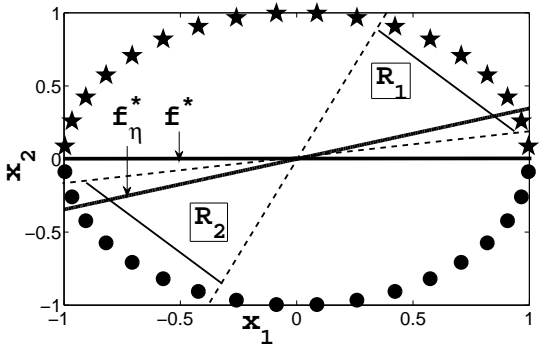}
\caption{Data for Example~2. $f^*$ is the classifier learnt when there is no noise. $f^*_{\eta}$ is the classifier learnt in presence of non-uniform label noise.}
\label{example4}
 \end{center}
\end{figure}

\textbf{Example 2: }Consider the unit circle centered at origin in $\Re^2$ and data points placed on its circumference as, $\xx_i=[\cos\theta_i~~\sin\theta_i]^T,\;\theta_i=\frac{(2i-1)\pi}{36},\;i=1\ldots 36$. $y_{\xx_i}=1,\;i=1\ldots 18$ and $y_{\xx_{i+18}}=-1,\;i=1\ldots 18$.
Assume that the probability distribution on the feature space is uniformly concentrated on the training dataset. Let the set of classifiers contain only linear classifiers passing through origin.\\
(a) \textit{Without Noise: }In this case, risk is minimized by $\ww^*=[0~~1.27]^T$. Classifier, $\mbox{sign}(\xx^T\ww^*)$, linearly separates the two classes. Thus $P[\mbox{sign}(\xx^T\ww^*)=y_{\xx}]=1$.\\
(b) \textit{With Noise: }Now let us introduce non-uniform label noise as follows.
$\eta_{\xx_i}=0.4,\;\xx_i\in R_1\cup R_2$, where $R_1=\{\xx_2,\ldots,\xx_7\}$ and $R_2=\{\xx_{20},\ldots,\xx_{25}\}$; 
 $\eta_{\xx_i}=0$ for rest of the points.
In this case, risk is minimized by $\ww^*_{\eta}=[-0.342~~0.988]^T$.
$\mbox{sign}(\xx^T\ww^*_{\eta})$ mis-classifies $\xx_1,\xx_2,\xx_{19}$ and $\xx_{20}$
as shown in Fig. \ref{example4}.
Hence $P[\mbox{sign}(\xx^T\ww_{\eta}^*)= y_{\xx}]=\frac{8}{9} \neq 1$.  

Thus squared error loss is not noise tolerant under non-uniform noise even if the minimum risk under 
noise-free case is zero and the optimal classifier is linear in parameters.\\
\textbf{Remark~3: }An interesting special case of non-uniform noise is class conditional classification noise (CCCN) \cite{StempfelR07a}, 
 where $\eta_{\xx}=\eta_i$ for $\xx \in C_i, \: i=1,2$. 
Least squares method may not be tolerant to such a non-uniform noise. However, using the proof of Corollary~\ref{corr1}, it is easy to verify
that FLD is noise tolerant under CCCN.

\subsection{Exponential Loss Function}
Exponential loss function is given by,
\begin{eqnarray}
L_{\exp}(f({\bf x}), y_{\bf x}) &=& \exp(-y_{\bf x}f({\bf x}))\nonumber
\end{eqnarray}
This is the effective loss function for adaboost. We show, through the following counter example, that exponential loss function is not tolerant to even uniform  noise.

\textbf{Example 3: }
Let $\{(x_1,y_{x_1}), (x_2, y_{x_2}),(x_3,y_{x_3})\}$ be the training dataset such that
$x_1=5$, $x_2=10$ and $x_3=11$, with $y_{x_1}=-1$, $y_{x_2}=-1$ and $y_{x_3}=+1$. 
Let the probability distribution on the feature space be uniformly concentrated on the training dataset.
Here, we find a linear classifier which minimizes the risk under exponential loss function. We consider linear classifiers 
 expressed as $\mbox{sign}(x+b)$.\\
(a) \textit{Without Noise: }The risk of a linear
classifier without label noise is written as:
\begin{eqnarray}
 R(b)=\frac{1}{3}\Big{[}e^{5+b}+e^{10+b}+e^{-11-b}\Big{]} \nonumber
\end{eqnarray}
By equating the derivative of $R(b)$ to zero, we get,
\begin{eqnarray}
\nonumber  && e^{5+b}+e^{10+b}-e^{-11-b}=0 \\
\nonumber \Rightarrow && b^*=\frac{1}{2}\ln\Big{(}\frac{e^{-11}}{e^{5}+e^{10}}\Big{)}=-10.5034
\end{eqnarray}
$\mbox{sign}(f(x))=\mbox{sign}(x+b^*)$ correctly classifies all the points.
Thus $P[\mbox{sign}(x+b^*)=y_x]=1$.\\
(b) \textit{With Noise: }Now let us introduce uniform label noise with noise rate $\eta=0.3$.
The risk will be 
\begin{eqnarray}
 \nonumber R^{\eta}(b)&=&\frac{(1-\eta)}{3}\Big{[}(e^{5+b}+e^{10+b}+e^{-11-b})\Big{]}\\
\nonumber &&+ \frac{\eta}{3}\Big{[}(e^{-5-b}+e^{-10-b}+e^{11+b})\Big{]}
\end{eqnarray}
Again equating the derivative of $R^{\eta}(b)$ to zero, we get,
\begin{eqnarray}
\nonumber & &  (1-\eta)\big{(}e^{5+b}+e^{10+b}-e^{-11-b}\big{)}-\eta\big{(}e^{-5-b}\\
\nonumber && +e^{-10-b}-e^{11+b}\big{)}=0 \\
\nonumber & \Rightarrow & b^*_{\eta}=\frac{1}{2}\ln \Big{(}\frac{0.7e^{-11}+0.3(e^{-5}+e^{-10})}{0.7(e^{5}+e^{10})+0.3 e^{11}}\Big{)}= -8.3052
\end{eqnarray}
$\mbox{sign}(f(x))=\mbox{sign}(x+b^*_{\eta})$ mis-classifies $x_2$.
Thus $P[\mbox{sign}(x+b_{\eta}^*)= y_x]=\frac{2}{3} \neq P[\mbox{sign}(x+b^*)= y_x] $.
Thus risk minimization under exponential loss is not noise tolerant even with uniform noise. 

\subsection{Log Loss Function}
Log loss function is given by,
\begin{eqnarray}
L_{\log}(f({\bf x}), y_{\bf x}) &=& \ln(1+\exp(-y_{\bf x}f({\bf x})))\nonumber
\end{eqnarray}
This is the effective loss function for logistic regression. Risk minimization with 
log loss function also is not noise tolerant. We demonstrate it using following counter example.

\textbf{Example 4: }
Consider the same training dataset as in Example 3. 
We need to find a linear classifier, $\mbox{sign}(x+b)$,  which minimizes the risk under log loss function. \\
(a) \textit{Without Noise: }
The risk of a linear classifier without label noise is
\begin{eqnarray}
 \nonumber R(b)=\frac{\ln(1+e^{5+b})+\ln(1+e^{10+b})+\ln(1+e^{-11-b})}{3}.
\end{eqnarray} 
Equating the derivative of $R(b)$ to  zero, we get,
\begin{eqnarray}
\nonumber & & \frac{e^{5+b}}{1+e^{5+b}}+\frac{e^{10+b}}{1+e^{10+b}}-\frac{e^{-11-b}}{1+e^{-11-b}}=0 \\
\nonumber &\Rightarrow & 2e^{26}t^3+(e^{15}+e^{21}+e^{16})t^2-1=0,
\end{eqnarray}
where $t = e^b$. Roots of this polynomial are $-0.0034$, $-2.75\times 10^{-5}$ and $2.73\times 10^{-5}$. 
The only positive root is $t=2.73\times 10^{-5}$. Using this value of $t$, we get $b^*=\ln(t)=-10.5086$.
$f(x)=x+b^*$ classifies all the points correctly.
Thus $P[\mbox{sign}(x+b^*)=y_x]=1$.\\
(b) \textit{With Noise: }Now let us introduce uniform label noise with noise rate $\eta=0.3$.
The risk will be, 
\begin{eqnarray}
 \nonumber R^{\eta}(b)&=&\frac{(1-\eta)}{3}\Big{[}\ln(1+e^{5+b})+\ln(1+e^{10+b})+ \\
\nonumber && \ln(1+e^{-11-b})\Big{]}+ \frac{\eta}{3}\Big{[}\ln(1+e^{-5-b})\\
\nonumber && +\ln(1+e^{-10-b})+\ln(1+e^{11+b})\Big{]} 
\end{eqnarray}
Equating the derivative of $R^{\eta}(b)$ to zero, we get a sixth degree polynomial in $t=e^b$ which has only one positive root. 
This root gives us the value of $b^*_{\eta}=-9.8607$. 
The classifier, $\mbox{sign}(f(x))=\mbox{sign}(x+b^*_{\eta})$ mis-classifies $x_2$.
Which means $P[\mbox{sign}(x+b_{\eta}^*)= y_x]=\frac{2}{3}$.

Thus, $P[\mbox{sign}(x+b_{\eta}^*)= y_x]\neq P[\mbox{sign}(x+b^*)= y_x]$ and 
log loss is not noise tolerant even with uniform noise.

\subsection{Hinge Loss Function}
This is a convex loss function and has the following form.
\begin{eqnarray}
 \nonumber L_{\mbox{hinge}}(f({\bf x}),y_{\bf x})=\max(0,1-y_{\bf x}f({\bf x})) 
\end{eqnarray}
Support vector machine is based on minimizing risk under the hinge loss. Here we show that
hinge loss function is not noise tolerant using a counter example.

\textbf{Example: 5 }Consider the same training dataset as in Example~3. 
Here we consider learning linear classifiers expressed as $\mbox{sign}(wx+b)$.\\
(a) \textit{Without Noise: }The risk of a linear classifier with noise-free training data is 
\begin{eqnarray}
 \nonumber R(w,b)&=&\frac{1}{3}\sum_{n=1}^3 \max[0,1-y_{x_n}(wx_n+b)]
\end{eqnarray}
To find the minimizer of $R(w,b)$, we need to solve 
\begin{eqnarray}
\nonumber &         \min_{w,b,\xi_1,\xi_2,\xi_3}& \frac{1}{3}\sum_{n=1}^3\xi_n\\
\nonumber &  s.t. & 5w+b\leq -1+\xi_1,\;\;\xi_1\geq 0                         \\
\nonumber &       & 10w+b\leq -1+\xi_2,\;\;\xi_2\geq 0                        \\
\nonumber &       & 11w+b\geq 1-\xi_3,\;\;\xi_3\geq 0
\end{eqnarray}
The optimal solution of the above linear program is $(w^*,b^*)=(54.7738,-571.221)$
which is also the minimizer of $R(w,b)$.  
$\mbox{sign}(w^*x+b^*)$ classifies all the points correctly. 
Thus $P[\mbox{sign}(w^*x+b^*)=y_{x}]=1$.\\
(b) \textit{With Noise: }
Now we introduce uniform label noise with noise rate $\eta=0.3$ in the training data. 
The risk of a linear classifier in presence of uniform label noise is 
\begin{eqnarray}
 \nonumber R^{\eta}(w,b)&=&\frac{1}{3}\sum_{n=1}^3 \big{[}(1-\eta)\max[0,1-y_{x_n}(wx_n+b)]\\
\nonumber && +\eta \max[0,1+y_{x_n}(wx_n+b)\big{]}
\end{eqnarray}
Minimizing of $R^{\eta}(w,b)$ by solving the equivalent linear program as earlier, we get 
$(w^*_{\eta},b_{\eta}^*)=(0.3333,-2.6667)$.
 The classifier $\mbox{sign}(w^*_{\eta}x+b^*_{\eta})$ mis-classifies $x_2$.
Thus $P[\mbox{sign}(w_{\eta}^*x+b^*_{\eta})= y_{x}]= \frac{2}{3} \neq P[\mbox{sign}(w^*x+b^*)=y_{x}]$.
Thus hinge loss is not noise tolerant even under uniform noise even when the optimal classifier is linear.

\section{Some Empirical Results}
In this section, we present some empirical evidence for our theoretical results. The main difficulty 
in doing such simulations is that there is no general purpose algorithm for risk minimization under 
0--1 loss.  Here we use the CALA-team algorithm proposed in \cite{Sastry09} which (under sufficiently 
small learning step-size) converges to  minimizer of risk under 0--1 loss in case of linear classifiers. 
Hence, here we restrict the simulations only to learning of linear classifiers and hence 
 give experimental results on Iris dataset.

Iris recognition is a three class classification problem in 4-dimensions.   
The first class, Iris-setosa, is linearly separable from the other two classes, namely, Iris-versicolor and Iris-virginica.  
We consider a linearly separable 2-class problem by combining the latter two classes as one class.

The original Iris data set 
has no label noise. We introduce different rates of
uniform noise varying from 10\% to 30\%.
We incorporated non-uniform label noise as follows. For every example, the probability of flipping
the label is based on which quadrant (with respect to the first two
features) the example falls in. 
The noise rate in this case is represented
by a quadruple with $i$-th element representing probability
of wrong  class label if the feature vector is in $i^{th}$ quadrant ($i=1,2,3,4$).

For training, we use entire dataset with label noise inserted to it.
We use the original noise-free examples for testing.
We use test error rate as an indicator of the noise-tolerance. We compare 
CALA algorithm for risk minimization under 0--1 loss with 
SVM (hinge loss), linear least square (squared error loss),
and logistic regression (log loss) which are risk minimization algorithms under different convex loss functions. 

The results are shown in Table~\ref{tab6}. For each noise rate, we generated ten random 
noisy training data sets. We show the mean and standard deviation of accuracy on test set with 
each of the algorithms. (The CALA algorithm \cite{Sastry09} is a stochastic one and hence has a 
non-zero standard deviation even in the case of no-noise data). As can be seen from the table, risk minimization under 
0--1 loss has impressive noise tolerance under both uniform and non-uniform label noise. Both SVM and logistic 
regression have the highest accuracy under no-noise; but their accuracy drops from 98\% to 89\% and 91\% respectively 
under uniform noise rate of 20\%. Linear least squares algorithm achieves accuracy of 92\% when there is no noise and it 
drops to only 91\% when 20\% uniform noise is added, showing that it is tolerant to uniform noise. 
(The performance of Fisher linear discriminant is similar to that of linear least squares: it achieves accuracy of 
94\%, 92.20\%$\pm$2.49, 91.07\%$\pm$2.88, 90.27\%$\pm$2.16 respectively on 0\%, 10\%, 20\% and 30\% uniform noise 
and 91.53\%$\pm$1.72 and 87.67\%$\pm$2.71 on the two cases of non-uniform noise). 
Also, the large standard deviations 
of SVM and other algorithms in the non-uniform noise case  show their 
 sensitivity to noise.
\begin{table}
\begin{center}
  \begin{tabular}{|p{.62in}|p{.49in}|p{.57in}|p{.5in}|p{.5in}|}
\hline 
              &  0--1 loss         &   hinge loss   &  sq. err. loss    & log loss  \\
 Noise Rate   &  (CALA)              &    (SVM,C=$10^3$)&  (Least Sq.)  & (LogReg)          \\ \hline 
No Noise      &  97.53$\pm$0.38    &    98.67       &  92.67         &  98.67          \\ \hline
Uniform 10\%  &  97.47$\pm$0.98    & 93.40$\pm$2.92 &  92.53$\pm$1.33&  92.87$\pm$1.47 \\ \hline
Uniform 20\%  &  97.07$\pm$1.09    & 89.47$\pm$4.02 &  91.47$\pm$1.17&  91.67$\pm$1.87 \\ \hline
Uniform 30\%  &  97.07$\pm$1.05    & 83.73$\pm$6.79 &  90.13$\pm$1.77&  90.07$\pm$1.99 \\ \hline
Non-Uniform 15,20,25,30\% & 96.47$\pm$1.49 & 89.67$\pm$3.18 & 91.27$\pm$1.49 & 91.67$\pm$2.07 
\\ \hline
Non-Uniform 30,25,20,15\% & 97.00$\pm$1.01 &  82.47$\pm$7.04 &   85.80$\pm$5.07 & 85.93$\pm$5.09 \\ \hline
\end{tabular} 
\end{center} 
\caption{Simulations results with Iris data} 
\label{tab6}
\end{table}

\section{Conclusion}
While learning a classifier, one has to often contend with noisy training data. In this paper, we presented some analysis to bring out
the inherent noise tolerant properties of the risk minimization strategy under different loss functions.

Of all the loss functions, the 0-1 loss function has best noise tolerant properties. 
We showed that it is noise tolerant under uniform noise and also under non-uniform noise if the risk minimizer achieves 
zero risk on uncorrupted or noise-free data.

If we consider the case where we think of our ideal noise-free sample as the one obtained by 
classifying {\em iid} feature vectors using Bayes optimal classifier, the minimum risk achieved 
would be zero if the family of classifiers over which the risk is minimized includes the 
structure of Bayes classifier. In such a case, the noise-tolerance (under non-uniform label 
noise) of risk minimization implies that if we find the classifier to minimize risk under 
0-1 loss function (treating the labels given in our training data as correct), we would (in a probabilistic sense) automatically 
learn the Bayes optimal classifier. This is an interesting result that makes risk minimization under  
0-1 loss a very attractive classifier learning strategy. 

A problem with minimizing risk under 0-1 loss function is that it is difficult to use any standard 
optimization technique to minimize risk due to discontinuity of loss function. Hence, given the 
noise-tolerance properties presented here, an interesting problem to address is that of some gradient-free 
optimization techniques to minimize risk under 0-1 loss function.  
For the linear classifier case, the stochastic optimization algorithm 
proposed in \cite{Sastry09} is one such algorithm. To really exploit the 
noise-tolerant property of the 0-1 loss function we need more efficient techniques of that kind and also techniques which 
work for nonlinear classifiers. 

On the other hand, risk under convex loss functions 
is easy to optimize. 
Many generic classifiers are based on minimizing risk under these convex loss function.
But it is observed in practice that in presence of noise, these approaches over-fit.

In this paper, we showed that these convex loss functions are not noise tolerant. 
Risk minimization under hinge loss, exponential loss and log loss
is not noise tolerant even under uniform label noise. This explains the problem one faces with algorithms 
such as SVM if the class labels given are sometimes incorrect.
We also showed that the linear least
squares approach is noise tolerant under uniform noise but
not under non-uniform noise. Same is shown to be true for
Fisher linear discriminant.  

Most algorithms for learning classifiers focus on minimizing risk under a convex loss function
to make the optimization  more
tractable. The analysis 
presented in this paper suggests that looking for
 techniques to minimize risk under 0-1 loss function may be a promising approach  
for classifier design especially when we have to learn from noisy
training data.
\bibliographystyle{ieeetr}
\bibliography{noise}
\end{document}